\relax
\documentclass[letterpaper]{article} 
\usepackage{aaai18}  
\usepackage{times}  
\usepackage{helvet}  
\usepackage{courier}  
\usepackage{url}  
\usepackage{graphicx}  
\usepackage{hyphenat}
\frenchspacing  
\setlength{\pdfpagewidth}{8.5in}  
\setlength{\pdfpageheight}{11in}  
  \pdfinfo{
/Title (Polynomial-time probabilistic reasoning with partial observations via
implicit learning in probability logics)
/Author (Brendan Juba)}
\setcounter{secnumdepth}{0}  

\usepackage{paralist,algorithm2e,amsmath,amsthm,amssymb}

\newtheorem{theorem}{Theorem}

\newtheorem{definition}[theorem]{Definition}

\newcommand{\calC}{\mathcal{C}}

\newcommand{\bbN}{\mathbb{N}}

\newcommand{\bbE}{\ensuremath{\mathbb{E}}}
\newcommand{\bbR}{\ensuremath{\mathbb{R}}}

 \begin{document}
%
\title{Polynomial-time probabilistic reasoning with partial observations via 
implicit learning in probability logics}
\author{Brendan Juba\thanks{Supported by 
NSF Award CCF-1718380.}\\ Washington University in St.\ Louis\\{\tt bjuba@wustl.edu}}

\maketitle
\begin{abstract}
Standard approaches to probabilistic reasoning require that one possesses an 
explicit model of the distribution in question. But, the empirical learning of 
models of probability distributions from partial observations is a problem for 
which efficient algorithms are generally not known. In this work we consider the
use of bounded-degree fragments of the ``sum-of-squares'' logic as a probability
logic. Prior work has shown that we can decide refutability for such fragments 
in polynomial-time. We propose to use such fragments to answer queries about 
whether a given probability distribution satisfies a given system of constraints
and bounds on expected values. We show that in answering such queries, such 
constraints and bounds can be implicitly learned from partial observations in 
polynomial-time as well. It is known that this logic is capable of deriving many
bounds that are useful in probabilistic analysis. We show here that it 
furthermore captures useful polynomial-time fragments of resolution. Thus, these
fragments are also quite expressive. 
\end{abstract}

\section{Introduction}
Most scientific reasoning is probabilistic. It is quite rare for a conclusion 
to hold categorically. Informal conclusions such as ``smoking causes cancer'' 
correspond to formally probabilistic claims that the rate of incidence of
cancer is higher in a population of smokers than another in which participants
are forbidden from smoking. Likewise, our knowledge of the specific cases that
comprise a study is almost necessarily incomplete. We are often interested in
latent variables, such as whether or not a patient has a specific disease, that
we seek to infer based on some observed attributes, e.g., the manifested 
symptoms. Unfortunately, if we do not already understand the processes that
connect the observed attributes to the latent factors of interest, e.g., in the
sense of possessing a probabilistic graphical model~\cite{pearl88} of the 
system, the situation is quite challenging.

Indeed, the currently dominant approach to reasoning from data in such problems,
as embodied for example by Markov Logic Networks~\cite{rd06}, is to first
learn such a probabilistic graphical model, and then apply weighted model 
counting on translations of this model~\cite{gd11,dw12}. 
(We discuss other approaches later.) The problem lies in the 
first stage, in the ``structure learning'' problem. When the examples are not 
fully specified, existing approaches based on Expectation-Maximization (``E-M'') 
do not scale well~\cite[Section 19.4]{kf09}. Although there are a variety of 
proposals for learning, for example, models with bounded tree-width~\cite
{nb04,skb13}, the networks encountered in practice often do not feature low 
tree-width~\cite{cd08}. In a similar spirit, Poon and Domingos~\citeyear{poond11}
proposed sum-product networks as an alternative model for probability 
distributions. While sum-product networks provide some advantages over standard 
graphical models~\cite{gd13}, ultimately just as with standard graphical models, 
structure learning has been accomplished by using the E-M meta-algorithm over the
model likelihood. Thus, it falls prey to the same issues.

In this work we will consider techniques for reasoning under partial information
that bypass the structure learning problem and indeed the whole framework of
probabilistic graphical models. The structure learning problem under partial 
information remains out of reach, so what we achieve will necessarily be 
incomparable to what can be achieved given such a graphical model. Instead, we
extend probability logics, e.g., as proposed by Nilsson~\citeyear{nilsson86}
or more aptly, the generalization to expectation proposed by Halpern and 
Pucella~\citeyear{hp07}, in two ways: First, we observe that these logics that 
only consider linear inequalities can be strengthened to logics that consider 
polynomial inequalities on the support and moments. For this, we will observe 
that the ``sum-of-squares'' (a.k.a., ``Positivstellensatz refutation'') logic 
introduced by Grigoriev and Vorobjov~\citeyear{gv01} can be interpreted as a 
probability logic, following work by Lasserre~\citeyear{lasserre01} on deciding
whether or not a probability distribution can be consistent with a given set
of moments (aka, ``moment problems''). We note that this logic has a rather
powerful polynomial-time fragment: in addition to being able to derive and use a
wide variety of standard analytic inequalities, {\em we also show that these 
fragments can simulate some of the strongest fragments of resolution known to be
decidable in polynomial time}. And second, {\em we show how learning from 
partial examples can be integrated into answering queries in such a logic using 
implicit learning}~\cite{juba13-ijcai}.

\subsection{Relationship to Other Work}
The second extension will distinguish our approach both from Nilsson's logic,
which does not consider how the input probabilistic bounds are obtained, and
from approaches such as reasoning directly about the maximum entropy 
distribution that is consistent with the given constraints~\cite{bghk96}.
Meanwhile, it is of course distinguished from the work on PAC-semantics and
learning to reason~\cite{kr97,valiant00,juba13-ijcai,michael14} in that it can 
estimate general probabilities and expected values, and is not restricted to 
only drawing conclusions about what is true with high probability.

The qualitatively most similar relative to our approach is probabilistic
logic programming~\cite{drk08}, and in particular, Bayesian logic programming
from interpretations~\cite{kdr08}. In this latter problem, we suppose that
an unknown distribution is specified by some Bayesian logic
program, and we would like to synthesize it from a collection of observations
drawn from the distribution. In order for this to be feasible, it
must be that the distribution has a nice description as a Bayesian logic 
program and so on. The distinction between our approach and Bayesian logic
programming is roughly 
that in
the setting we will consider, the distribution will not be restricted, but we
will therefore necessarily give up hope of being able to generate a complete, 
compact description of the distribution (which in general may require more than
$2^{2^n}$ bits to represent even if it is supported on $n$ propositional 
attributes). Instead, we will only be able to infer some relationships between
the moments and support of the distribution, depending on how much information
the examples provide. Similarly, Khot et al.~\citeyear{knks15} proposed to learn
explicit models of the data distribution, and to use these models to perform 
inference in cases where the data has been masked at random. In addition to not
relying on the ability to represent the distribution by such models, our approach
does not assume that the data-hiding process is independent of the data 
distribution. But, our inferences will be relatively limited.

Our work relies on the close connection between fragments of the sum-of-squares 
logic and hierarchies of semidefinite program relaxations of (probability
distributions supported on) systems of polynomial inequalities pioneered
independently by Lasserre~\citeyear{lasserre01}, Parrilo~\citeyear{parrilo00}, 
Nesterov~\citeyear{nesterov00}, and Shor~\citeyear{shor87}. These works
proposed the use of these semidefinite programming hierarchies as a way of
relaxing polynomial optimization problems, which captures many discrete 
optimization problems, especially those using 0-1 integer optimization. These
techniques have been quite successful at giving new algorithms for many 
machine learning problems, including many state-of-the-art algorithms~\cite
{bks15,rs15,hss15,bm16,mss16,agmr17}. The difference is that these approaches
take the view that we are seeking to find a specific representation by solving
a semidefinite program relaxation in which the distributions are over candidate
solutions to the program. Of particular interest in this vein is work by Hopkins
and Steurer~\citeyear{hs17}, who propose this approach as a way of solving
general kinds of maximum likelihood/maximum a posteriori inference problems,
and demonstrate it on a community detection problem. Hopkins and Steurer's work
is most notable for making the connection between such an approach and Bayesian 
inference quite explicit. The main distinction here is that we are 
interested in deciding queries about the distribution under partial information,
whereas Hopkins and Steurer are interested in obtaining explicit estimates of 
latent parameters. 

We stress, however, that Lasserre~\citeyear{lasserre01} in particular viewed 
these programs as a means to decide ``moment problems,'' i.e., whether or not 
there may exist a probability distribution consistent with a given set of 
moments. This is likewise how we use these programs. The main distinction 
between our work and Lasserre's is two-fold. First, Lasserre was primarily 
interested in establishing that the hierarchy of relaxations contain some tight 
relaxation, i.e., for each system, for some sufficiently high degree, the 
program can only have actual probability distributions as its feasible 
solutions. By contrast, like the subsequent work in the algorithms community, 
we are interested in the power of the sum-of-squares programs at fixed, finite 
levels of the hierarchy for probabilistic inference. Second, we are interested 
in how to learn additional constraints by
incorporating empirical estimates of the moments from partial 
observations into the programs.

\section{The Sum-of-Squares Probability Logic}
For motivation, we will first briefly recall Nilsson's probability logic~\cite
{nilsson86}, based on linear programming, or perhaps more accurately, the
reconstruction of Nilsson's work by Fagin et al.~\citeyear{fhm90}, and the
generalization of this to reasoning about expected value by Halpern and 
Pucella~\citeyear{hp07}. 
We then recall the sum-of-squares
formulation for a system of polynomial inequalities and the connection to these
logics via the ``pseudo-expectation'' view introduced by Barak et al.~\citeyear
{brs11,bks14} (relaxing Lasserre~\citeyear{lasserre01}). 

\subsection{Probability Logics Based on Linear Programming}
In the propositional case of Nilsson's logic, we have a variable $p(\varphi)$ 
representing the likelihood of each propositional formula $\varphi$ under 
consideration. Naturally, $p(\varphi)\in [0,1]$. We have a number of constraints
that we know hold in general among such variables,such as $p(\neg\varphi)+
p(\varphi)=1$. Or, more generally, $p(\psi\wedge\varphi)+p(\neg\psi\wedge
\varphi)=p(\varphi)$, where $p(\top)=1$ and for any equivalent formulas $\psi$ 
and $\varphi$, $p(\psi)=p(\varphi)$. In Nilsson's logic, the basic relationships
between these variables are given by a system of {\em linear inequalities}, of 
the form $\sum_\varphi a(\varphi)p(\varphi)\geq c$, where the coefficients 
$a(\varphi),c\in\bbR$. We can encode all of the above relationships as linear 
inequalities, as well as relationships such as $p(\psi)\leq p(\varphi)$ if 
$\psi\models\varphi$. Now, given such a system of linear inequalities,
\begin{align*}
\sum_\varphi a^{(1)}(\varphi)p(\varphi)&\geq c^{(1)}\\
\vdots&\\
\sum_\varphi a^{(m)}(\varphi)p(\varphi)&\geq c^{(m)}
\end{align*}
for any $\alpha^{(1)},\ldots,\alpha^{(m)}\in\bbR$, the linear combination
\[
\sum_{i=1}^m\alpha^{(i)}\sum_\varphi a^{(i)}(\varphi)p(\varphi)\geq 
\sum_{i=1}^m\alpha^{(i)}c^{(i)}
\]
is also true. Nilsson's logic also allows us to infer this.

Now, given that Nilsson's logic includes propositional reasoning in its axioms, 
it is naturally NP-hard and Fagin et al.~\citeyear{fhm90} show that when we 
restrict our attention to measurable probabilities, it is in fact NP-complete. 
We will consider a weaker starting point that is tractable. All of the initial 
constraints can be written as a system of linear inequalities; suppose that some
subset of such inequalities is given. Now, given an additional linear inequality
constraint, the question of whether or not it is consistent with our initial
subset is merely linear feasibility. And, it follows from Farkas' lemma that the
system is infeasible if and only if we can derive $0\geq 1$ from the linear
inequality inference rule. So, the linear combination inference rule gives 
refutation-completeness for this (admittedly weak) logic, and algorithms for 
linear programming give a method for deciding feasibility in polynomial time. 
By binary search on lower and upper bounds for the $p(\varphi)$, we can search 
for the tightest upper and lower bounds entailed by this system by adding the 
inequalities one at a time and testing for feasibility.

Indeed, essentially the same line of reasoning carries over to the 
generalization of Nilsson's logic to reasoning about expectations by Halpern
and Pucella~\citeyear{hp07}, in which we replace the $p(\varphi)$ variables
with $e(x_i)$ variables indicating the expected value of some random variable
$x_i$. We remark that in Halpern and Pucella's logic, one can treat the original
propositional variables $\varphi$ as random variables, and recover Nilsson's 
logic. The main defect to the use of linear programming for inference is again
that we drop most of the powerful propositional reasoning capabilities of the
logic, reducing them to a handful of chosen inequalities.

\subsection{The Sum-of-Squares Positive-Semidefinite Progams}
We will adopt the perspective of Halpern and Pucella. Let us now consider, for
some fixed degree parameter $d\in\bbN$, a set of variables corresponding to the
expected values of all monomials over our random variables $x_1,\ldots,x_n$ of
total degree at most $d$, which we denote $e(x^{\vec{\alpha}})$, with the 
interpretation that $x^{\vec{\alpha}}=\prod_{i=1}^nx_i^{\alpha_i}$ where
$\sum_{i=1}^n\alpha_i\leq d$ ($\vec{\alpha}$ is thus the vector of exponents of 
the monomial). We will enforce $e(1)=1$ (for the ``empty monomial'' $1$).

We will consider both discrete, propositional variables and bounded,
continuous-valued variables.
We will use the following standard set of constraints for propositional
variables: for each propositional variable, we will include variables 
$x_i$ and $\bar{x}_i$ encoding the variable $x_i$ and its negation, related by
the {\em complementarity axiom}, $x_i-\bar{x}_i+1=0$. Each will also be 
constrained by the {\em Boolean axiom}, $x_i^2-x_i=0$ (and $\bar{x}_i^2-
\bar{x}_i=0$). For other, continuous variables $x_i$, we will usually assume
that there is an upper and lower bound on the range $B_i,L_i\in\bbR$, and we 
will include explicit bounding inequalities: $\max\{B_i^2,L_i^2\}-x_i^2\geq 0$
for each $x_i$, and for each monomial $x^{\vec{\alpha}}$ in which $\alpha_i\leq
1$ for all $i$, we give an upper and lower bound, $B_{\vec{\alpha}}$ and 
$L_{\vec{\alpha}}$, on the monomial as follows: we compute the largest and 
smallest magnitude positive and negative values consistent with the bounds on 
the range of each variable (if any) in the monomial. These may be computed in 
linear time by a simple dynamic programming algorithm that iteratively considers
only the bounds for the product of the first $k$ attributes. We finally take 
the largest of these values for $B_{\vec{\alpha}}$ and the smallest for 
$L_{\vec{\alpha}}$. We then include $B_{\vec{\alpha}}-x^{\vec{\alpha}}\geq 0$, 
and $x^{\vec{\alpha}}-L_{\vec{\alpha}}\geq 0$; bounds on all monomials will
follow from these.\footnote{%
We need to separately bound the degree-1 moments since, as we will see,
inequality constraints can only be multiplied by polynomials of even degree. We 
note that from the Boolean axioms we will be able to easily derive $x_i\geq 0$ 
and $x_i\leq 1$, so we need not include additional inequalities for these 
variables.}  
Any additional constraints on the distribution's support in the 
background knowledge and query system will be added to the system of defining 
polynomial constraints.

Now, consider the matrix in which rows and columns are indexed by the monomials
of degree up to $d/2$ in some standard way, and the $(\vec{\alpha},\vec{\beta})$
entry of this matrix is the variable $e(x^{\vec{\alpha}+\vec{\beta}})$. That is,
for the vector $\vec{v}$ of variables indexed in the same order, the matrix is
$\bbE[\vec{v}\vec{v}^\top]$, which is certainly positive semidefinite. We refer 
to this as a {\em moment matrix}. Using a semidefinite program, we can 
strengthen our original linear program formulation by adding the constraint that
this moment matrix must be positive semidefinite.

We can interpret this as follows. We note that for any real vector $\vec{p}$ 
again indexed by the monomials, $\vec{p}^\top\vec{v}$ gives the expected value 
of the polynomial with coefficients given by $\vec{p}$: $\sum_{\vec{\alpha}}
p_{\vec{\alpha}}e(x^{\vec{\alpha}})=\bbE[\sum_{\vec{\alpha}}p_{\vec{\alpha}}
x^{\vec{\alpha}}]$. So, the positive semidefiniteness of this moment matrix
corresponds to requiring that in any feasible solution, the square of any 
polynomial has a nonnegative expected value.

Another family of constraints that we will use is the following. Suppose
that we know the polynomial constraint $g(\vec{x})\geq 0$ holds for all $\vec{x}$ in
the support of the distribution. Then surely, $\bbE[p(\vec{x})^2g(\vec{x})]\geq
0$ for any polynomial $p(\vec{x})$. We can capture such a constraint as follows:
again, we consider a matrix in which the rows and columns are indexed by the 
monomials $\vec{\alpha}$ up to degree $(d-d')/2$ where $g(\vec{x})$ has total
degree $d'$, such that in position $(\vec{\alpha},\vec{\beta})$ we have
$\sum_{\vec{\gamma}}g_{\vec{\gamma}}e(x^{\vec{\alpha}+\vec{\beta}+
\vec{\gamma}})$. If we then assert that this {\em ``localizing matrix''} for
$g$ is positive semidefinite, it indeed imposes the desired constraint on the
possible values for the $e(x^{\vec{\alpha}})$.

Finally, similarly, if $h(\vec{x})=0$ holds for all $\vec{x}$ in the distribution's
support, then $\bbE[p(\vec{x})h(\vec{x})]=0$ for all polynomials $p$. We can
add a linear constraint $\sum_{\vec{\gamma}}h_{\gamma}x^{\vec{\alpha}+
\vec{\gamma}}=0$ for all $x^{\vec{\alpha}}$ of degree at most $d-d'$ when $h$
has total degree $d'$.

The resulting semidefinite program given by this system of moment matrix and
localizing matrix constraints is referred to as the {\em degree-$d$ 
sum-of-squares relaxation} of the system of polynomial inequalities~\cite
{shor87,nesterov00,parrilo00,lasserre01}. We can test the feasibility
of this system in polynomial time using semidefinite program
solvers. As we will recall next, analogous to our earlier consequence of
Farkas' Lemma, the feasibility of the resulting semidefinite programs is 
captured by a simple algebraic logic under some general conditions---in
particular given we included explicit bounds on the range of the random 
variables among the defining polynomial inequalities.

\subsection{Sum-of-Squares Refutations for Probabilities}
For polynomials $g_1,\ldots,g_r$ and $h_1,\ldots,h_s$ in
$\bbR[x_1,\ldots,x_n]$, consider the set of $\vec{x}\in\bbR^n$ satisfying
$g_j(\vec{x})\geq 0$ (for $j=1,\ldots,r$) and $h_k(\vec{x})=0$ (for
$k=1,\ldots,s$). A {\em sum-of-squares} is, as the name suggests, a polynomial
$\sigma(\vec{x})=\sum_{\ell=1}^tq_\ell(\vec{x})^2$ for polynomials $q_1,\ldots,q_t\in
\bbR[x_1,\ldots,x_n]$. It is easy to see that a sum-of-squares $\sigma$ must be
non-negative for every $\vec{x}\in \bbR^n$. Thus,
if we can find sums-of-squares $\sigma_0,\sigma_1,\ldots,\sigma_r$ and polynomials
$u_1,\ldots,u_s$ such that
\begin{equation}\label{refutation}
\sigma_0(\vec{x})+\sum_{j=1}^r\sigma_j(\vec{x})g_j(\vec{x})+\sum_{k=1}^su_k(\vec{x})h_k(\vec{x}) = -1
\end{equation}
the set defined by the inequalities $g_1(\vec{x})\geq 0,\ldots,g_r(\vec{x})\geq
0$ and $h_1(\vec{x})=0,\ldots,h_s(\vec{x})=0$ must be empty (or else we would
reach a contradiction).
The {\em sum-of-squares} proof system of Grigoriev and Vorobjov~\citeyear{gv01}
is to show that such systems are unsatisfiable by finding such polynomials:
\begin{definition}[Sum-of-squares refutation: syntax]
A {\em sum-of-squares refutation} of a system of equalities
$h_1(\vec{x})=0,\ldots,h_s(\vec{x})=0$ and inequalities
$g_1(\vec{x})\geq 0,\ldots,g_r(\vec{x})\geq 0$ consists of
$\sigma_0,\ldots,\sigma_r$ and $u_1,\ldots,u_s$ satisfying Equation~\ref{refutation}. The
{\em degree} of the sum-of-squares refutation is degree of the resulting formal
expression, while the {\em size} is the number of monomials in $\sigma_0,\ldots,\sigma_r$
and $u_1,\ldots,u_s$.
\end{definition}
\noindent
Now, noting that the program capturing the existence of a sum-of-squares 
refutation is the dual to the sum-of-squares relaxation, one obtains
\begin{theorem}[
Soundness \cite{shor87,nesterov00,parrilo00,lasserre01}]
\label{ineq-dual}
Let $g_1(\vec{x})\geq 0,\ldots,g_r(\vec{x})\geq 0$, $h_1(\vec{x})=0,\ldots,
h_s(\vec{x})=0$ be a system of constraints
that is explicitly compact.
Then either there is a degree-$d$ sum-of-squares refutation or there is a
solution to the degree-$d$ sum-of-squares relaxation.
\end{theorem}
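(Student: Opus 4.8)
The plan is to read the degree-$d$ sum-of-squares relaxation as a semidefinite feasibility problem and, when it is infeasible, to extract a refutation from a dual separating certificate; explicit compactness is the ingredient that makes this duality exact.

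The easy direction — that a refutation and a feasible relaxation solution cannot coexist — is immediate and essentially already argued above: if $\bbE$ is a feasible pseudo-expectation operator (so $\bbE[1]=1$, the moment matrix forces $\bbE[q^2]\ge 0$, each localizing matrix forces $\bbE[\sigma_j g_j]\ge 0$, and the $h_k$-equalities force $\bbE[u_k h_k]=0$), then applying $\bbE$ to both sides of an identity $\sigma_0+\sum_j\sigma_j g_j+\sum_k u_k h_k=-1$ gives a nonnegative left-hand side equal to $-1$, a contradiction. So the substance of the theorem is that at least one of the two alternatives must hold.

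For that, I would write the relaxation precisely: we seek a moment vector $e=(e(x^{\vec{\alpha}}))_{|\vec{\alpha}|\le d}$ in the affine set cut out by $e(1)=1$ together with the linear equalities $\sum_{\vec{\gamma}}h_{k,\vec{\gamma}}\,e(x^{\vec{\alpha}+\vec{\gamma}})=0$, such that the moment matrix $M(e)$ and each localizing matrix $M_{g_j}(e)$ are positive semidefinite; each entry of these matrices is a fixed linear functional of $e$, so this is exactly the question of whether an affine subspace meets a product of PSD cones. I then invoke a theorem of alternatives for semidefinite feasibility: when the system is infeasible, there are dual slacks $Q_0,\dots,Q_r\succeq 0$ (one for the moment matrix, one per localizing matrix) and scalars weighting the normalization $e(1)=1$ and the $h_k$-equalities, such that the induced linear functional on $e$ vanishes on the constraint subspace while certifying the value $-1$. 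Decomposing $Q_j=\sum_\ell q_{j,\ell}q_{j,\ell}^\top$ turns the term $\langle Q_j, M_{g_j}(e)\rangle$ into $\bbE[\sigma_j(\vec{x})g_j(\vec{x})]$, where $\sigma_j$ is the sum of squares of the polynomials whose coefficient vectors are the $q_{j,\ell}$, while the scalars on the $h_k$-equalities assemble into honest polynomial multipliers $u_k\in\bbR[x_1,\dots,x_n]$; collecting terms yields an identity $\sigma_0+\sum_j\sigma_j g_j+\sum_k u_k h_k=-1$, and the degrees assigned to the localizing and $h_k$-blocks were chosen precisely so that this identity has degree at most $d$ — a degree-$d$ sum-of-squares refutation.

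The main obstacle is exactly where the hypothesis is used: plain SDP strong duality can fail, so an infeasible semidefinite system need not admit an \emph{exact} dual certificate, only an approximate one or a certificate of weak infeasibility. To repair this I would use that \emph{explicit compactness} places a ball inequality $N-\sum_i x_i^2\ge 0$ (or a polynomial dominating it) among the $g_j$; together with positive semidefiniteness this bounds every entry of $M(e)$ and of the localizing matrices, so the primal feasible set is compact. A compact convex set either contains a point or is \emph{strictly} separated from the infeasible target by a hyperplane that attains its separation, and it is this attainment — rather than a limiting sequence of near-certificates — that produces the exact polynomial identity. The remaining bookkeeping (that the multipliers have the right degrees and lie in $\bbR[x_1,\dots,x_n]$, and that the $Q_j$ decompositions respect the monomial indexing) is routine from the way the program was constructed.
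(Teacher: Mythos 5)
The paper itself does not prove this theorem: it is imported from the cited literature (Shor, Nesterov, Parrilo, Lasserre) with the one-line remark that the program searching for a refutation is the SDP dual of the relaxation, so the comparison here is with the standard proof behind that citation. Your outline follows the same duality route and correctly isolates the crux---plain SDP duality can fail, so infeasibility of the relaxation does not automatically come with an exact dual certificate---but your proposed repair does not actually close that gap. Compactness of the primal feasible set is vacuous precisely in the case you need it (the set is empty), and the strict-separation variant you gesture at (separating the compact piece cut out by the moment matrix, the ball localizing matrix and $e(1)=1$ from the closed set cut out by the remaining constraints) only yields a single separating functional on the moment vector. To convert that functional into the structured identity $\sigma_0(\vec{x})+\sum_{j}\sigma_j(\vec{x})g_j(\vec{x})+\sum_k u_k(\vec{x})h_k(\vec{x})=-1$ you must decompose it along the individual constraint cones, i.e., you need the cone of degree-$d$ certificates (the truncated quadratic module generated by the $g_j$ plus the degree-truncated ideal generated by the $h_k$) to be closed; sums of closed convex cones need not be closed, and this closedness is exactly the nontrivial content that explicit compactness supplies. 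The standard argument uses the ball constraint (or Boolean axioms) to bound the coefficients of every possible summand $\sigma_j$, $u_k$ in any degree-$d$ representation of a fixed polynomial (a Powers--Scheiderer-type bound, as in Lasserre's and Josz--Henrion's treatments), from which closedness of the certificate cone follows. As written, the exactness of the certificate is asserted rather than proved.

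With that closedness lemma in hand, the cleaner finish runs in the contrapositive of the direction you chose: assume there is no degree-$d$ refutation, so the constant polynomial $-1$ is not in the closed convex certificate cone; separate the point from the cone to obtain a linear functional $L$ on polynomials of degree at most $d$ with $L\geq 0$ on the cone and $L(-1)<0$, whence $L(1)>0$ automatically and one may normalize $L(1)=1$; then $L\geq 0$ on squares, $L\geq 0$ on products $q^2 g_j$, and $L=0$ on low-degree multiples of the $h_k$ say exactly that the moment matrix and localizing matrices built from $L$ are feasible, i.e., $L$ is a solution to the degree-$d$ relaxation. Your ``easy direction'' is fine but is not needed for the either/or form of the statement; the substance of the theorem is the existence half, and that is the part where your sketch currently leans on an attainment claim that the compactness argument you give does not deliver.
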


Note that this means that if there is a sum-of-squares refutation, then there 
cannot be any probability distributions consistent with the given support 
constraints, or else these would give a feasible solution to the semidefinite 
program we developed in the previous section. Thus, for any degree, 
sum-of-squares is a {\em sound}, but possibly incomplete refutation system for 
probability distributions, which can be decided in polynomial time (in the 
number of attributes and size of the coefficients in the proofs) by solving the 
corresponding semidefinite program. 

We can furthermore add prior knowledge of constraints on the moments of the 
distribution to the semidefinite program, and read off an extended kind of 
sum-of-squares proof from the dual: If we add the constraint 
$e(x^{\vec{\alpha}})\leq\gamma$ to the semidefinite program, this corresponds to
adding nonnegative multiples of the polynomial $(\gamma-x^{\vec{\alpha}})$. A 
sum-of-squares derivation of $-1$ yields a contradiction now by considering the 
expected value of the derived expression for a distribution additionally 
satisfying the moment constraints: the expression is equal to $-1$, but 
$\bbE[\gamma-x^\alpha]\geq 0$ by definition and the rest of the expression is 
again nonnegative on the distribution's support. Indeed, for any polynomial
$p(\vec{x})$, we can incorporate knowledge that $\bbE[p(\vec{x})]\leq \gamma$
by adding nonnegative multiples of $(\gamma - p(\vec{x}))$ in the dual.

If the set is {\em ``explicitly compact,''} e.g., if the variables are either
all explicitly Boolean (the constraint $x_i^2-x_i=0$ is included) or bounded
($x_i^2\leq B_i$ is included for some $B_i\geq 0$) Putinar's 
Positivstellensatz~\cite{putinar} asserts that such proofs {\em must} exist,
i.e., the system is {\em complete}.
\begin{theorem}[Completeness, a corollary of Putinar~\citeyear{putinar}]\label{putinar-sdp}
There exists a probability distribution with expected values 
$\{e(x^{\vec{\alpha}})\}_{\vec{\alpha}\in\bbN^n}$ supported on a
set given by an explicitly compact system $g_1(\vec{x})\geq 0,\ldots,
g_r(\vec{x})\geq 0$, $h_1(\vec{x})=0,\ldots,h_s(\vec{x})=0$
iff every moment matrix is positive semidefinite, every localizing matrix for 
each $g_j$ is postive semidefininte, and every localizing matrix for each $h_k$
is zero (i.e., solutions exist for all degrees $d$).
\end{theorem}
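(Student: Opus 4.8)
The plan is to prove the two directions separately, with the ``only if'' direction being a routine integration argument and the ``if'' direction resting squarely on Putinar's Positivstellensatz (Theorem~\ref{putinar-sdp} being advertised as its corollary precisely because that is where all the content is).

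For the forward direction I would start from a probability distribution $\mu$ supported on $K=\{\vec{x}:g_j(\vec{x})\ge 0\ \forall j,\ h_k(\vec{x})=0\ \forall k\}$ with $e(x^{\vec\alpha})=\bbE_\mu[x^{\vec\alpha}]$, and verify each positive-semidefiniteness claim by exhibiting the corresponding expectation: for the coefficient vector $\vec{p}$ of a polynomial $p$, $\vec{p}^\top M\vec{p}=\bbE_\mu[p(\vec{x})^2]\ge 0$ gives the moment matrix $M\succeq 0$; $\vec{p}^\top M_{g_j}\vec{p}=\bbE_\mu[p(\vec{x})^2 g_j(\vec{x})]\ge 0$ (using $g_j\ge 0$ on the support) gives the localizing matrix $M_{g_j}\succeq 0$; and $\bbE_\mu[p(\vec{x})h_k(\vec{x})]=0$ for every $p$ (using $h_k=0$ on the support) says each localizing matrix for $h_k$ is the zero matrix. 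Since this argument is uniform in the degree $d$, feasibility holds at every level.

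For the reverse direction I would assemble the given values into the linear functional $L$ on $\bbR[x_1,\ldots,x_n]$ with $L(x^{\vec\alpha})=e(x^{\vec\alpha})$ and $L(1)=1$, which is defined on all polynomials precisely because the hypothesis supplies moments of every degree. The matrix conditions translate, exactly as in the forward direction, to $L(p^2)\ge 0$, $L(p^2 g_j)\ge 0$, and $L(p\,h_k)=0$ for all polynomials $p$. Next I would check that explicit compactness makes the quadratic module generated by the $g_j$ (together with the ideal of the $h_k$) Archimedean: in the bounded case each variable carries an explicit upper bound on $x_i^2$, say $c_i-x_i^2\ge 0$, among the $g_j$, so $\sum_i(c_i-x_i^2)$ certifies $N-\sum_i x_i^2\ge 0$ in the module with $N=\sum_i c_i$; in the Boolean case $x_i^2=x_i$ together with the easily derived $0\le x_i\le 1$ (cf.\ the footnote: $x_i=x_i^2\ge 0$ and $1-x_i=(1-x_i)^2\ge 0$) again bounds $\sum_i x_i^2$. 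With the Archimedean property in hand I would invoke Putinar's Positivstellensatz: for any polynomial $q$ with $q>0$ on $K$ and any $\epsilon>0$, $q+\epsilon>0$ on $K$, hence $q+\epsilon=\sigma_0+\sum_j\sigma_j g_j+\sum_k u_k h_k$ for sums-of-squares $\sigma_0,\ldots,\sigma_r$ and polynomials $u_1,\ldots,u_s$ (the equality terms being handled by writing each $u_k$ as a difference of squares, $u_k=((u_k+1)/2)^2-((u_k-1)/2)^2$). Applying $L$ and using the three positivity properties yields $L(q)+\epsilon\ge 0$, so $L(q)\ge 0$ after $\epsilon\to 0$; the same bound for any $q\ge 0$ on $K$ follows by applying this to $q+\epsilon$. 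Thus $L$ is nonnegative on every polynomial nonnegative on $K$, and since $K$ is closed, Haviland's theorem produces a Borel measure $\mu$ on $K$ representing $L$; because $L(1)=1$, $\mu$ is a probability distribution with exactly the prescribed moments.

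I expect the only genuine obstacle to be the invocation of Putinar's theorem itself; the rest is bookkeeping: the dictionary between the matrices and the functional $L$, the Archimedean check, and the reduction of equalities to differences of squares. Two points deserve explicit mention, though. First, completeness is inherently an ``in the limit'' phenomenon---at a fixed finite degree $d$ the truncated relaxation can be feasible with no representing measure, so the hypothesis that feasibility holds for \emph{all} $d$ (equivalently, that a full moment sequence is given) is essential. Second, the degenerate case $K=\emptyset$ is consistent with the biconditional: Putinar would then place $-1$ in the module, forcing $L(-1)\ge 0$ and contradicting $L(1)=1$, so the matrix conditions cannot all hold and there is nothing to prove.
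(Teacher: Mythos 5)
Your proof is correct and follows essentially the route the paper intends: the paper states this theorem without proof, as a corollary of Putinar, and your argument---the forward direction by direct computation of expectations, the reverse by assembling the Riesz functional $L$, checking the Archimedean property from the explicit bound/Boolean constraints, applying Putinar's Positivstellensatz with the $\epsilon$-shift, and invoking Haviland (with $e(1)=1$ giving a probability measure)---is the standard derivation. The only remark worth adding is that Putinar's paper already contains this moment-problem form directly (nonnegativity of $L$ on an Archimedean quadratic module implies a representing measure), so your Haviland step can be replaced by citing that form, which is presumably what the paper means by ``corollary.''
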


\section{The Expressive Strength of Sum-of-Squares}
We now briefly review the surprising strength of these proofs, and establish 
some new results on their Boolean reasoning ability.
As with the logics of Nilsson~\citeyear{nilsson86} and Halpern and 
Pucella~\citeyear{hp07}, by searching over feasible inequality constraints, we 
can use sum-of-squares queries to compute upper and lower bounds on 
probabilities (of Boolean expressions) and more generally expectations
of polynomials of real-valued variables, thus bounding the variance
and so on. We can estimate conditional probabilities $\Pr[x_a|x_b]$ by 
separately estimating bounds on $\Pr[x_a\wedge x_b]=\bbE[x_ax_b]$ and 
$\Pr[x_b]=\bbE[x_b]$. The key question is how tight these bounds are.

Much of the excitement about sum-of-squares in algorithms stems from the power
of even low-degree (e.g., degree-4) sum-of-squares proofs to capture a 
surprisingly strong fragment of probabilistic reasoning. In particular, Barak et
al.~\citeyear{bbhksz12} show that such sum-of-squares proofs exist for key 
analytic inequalities such as the Cauchy-Schwarz and H\"{o}lder inequalities and
hypercontractivity bounds. O'Donnell and Zhou~\citeyear{oz13} and Kauers et 
al.~\citeyear{kotz14} further established that low-degree sum-of-squares can 
derive reverse hypercontractivity bounds, and thus also the KKL Theorem~\cite
{kkl88} and an invariance principle (generalizing the central limit 
theorem)~\cite{moo10}. These theorems were used previously to analyze instances
of optimization problems that were hard for existing algorithms. These works
then observed that the sum-of-squares relaxation can actually then derive tight
bounds that solve these very same instances since sum-of-squares refutations
capture all of these constraints. 
We thus observe that sum-of-squares is quite powerful for probabilistic
analysis.

For Boolean reasoning, sum-of-squares simulates a variety of fragments of systems 
for which polynomial-time algorithms were already known. 
Notably, Berkholz~\citeyear{berkholz18} has shown that degree-$2d$ sum-of-squares
simulates the degree-$d$ fragment of Polynomial Calculus when we include the
Boolean axioms $x_i^2-x_i=0$ for all variables $x_i$. Polynomial calculus is a
system that allows us to derive new polynomial equations by taking linear
combinations of previous lines, or by multiplying a previous line by an 
indeterminate (variable). In particular, if we include a variable for each literal
and the complementarity axiom $x_i+\bar{x}_i-1=0$, the degree-$d$
fragment easily simulates {\em width-$d$} resolution~\cite{absrw02}, where recall
the width is the maximum number of literals in any clause. This is accomplished
by representing the clauses as a system of single monomial constraints of degree 
at most $d$: for the clause $\ell_1\vee\cdots\vee\ell_k$, we have $\bar{\ell}_1
\cdots \bar{\ell}_k=0$ is an equivalent constraint of degree at most $d$. 

We further show that sum-of-squares simulates space-bounded treelike resolution 
without the bounded-width requirement. Although the encoding used in the above 
works cannot express wide clausal constraints on the support, we can circumvent 
this by using a constraint $\sum_i\ell_i\geq 1$ on the support to encode the 
clause $\bigvee_i\ell_i$ (with each $\ell_i$ also constrained by $\ell_i^2-
\ell_i=0$). We will use the following 
characterization of space-bounded treelike resolution given by Ans\'{o}tegui et 
al.~\citeyear{ablm08}: For a {\em partial assignment} to propositional variables
$x_1,\ldots,x_n$, $\rho\in\{0,1,*\}^n$ (where $*$ means ``unassigned'') the 
partial evaluation or restriction of a formula $\varphi$ by $\rho$, denoted 
$\varphi|_\rho$, is obtained by plugging in $\rho_i$ for $x_i$ when $\rho_i\neq 
*$, and simplifying the Boolean connectives in the natural way: we delete true 
inputs to ANDs, false inputs to ORs, and replace ANDs and ORs respectively with 
false and true if they have, respectively, a false or true input. (Negations are
simply evaluated if their input is.) Recall that {\em unit propagation} on a set
of clauses $\calC$ and a partial assignment $\rho$ is the inference rule that, 
if some clause $C\in\calC$ simplifies to a single literal $\ell$ under 
partial evaluation by $\rho$ then we can infer the setting that satisfies $\ell$
and add it to $\rho$. If we ever obtain an empty clause, $C|_\rho=\bot$, then 
this is a {\em unit propagation refutation}. Ans\'{o}tegui et al.~\citeyear
{ablm08} essentially observe that unit propagation captures ``clause space 1'' 
treelike resolution refutations, and can be generalized to space $s$ by allowing
$s-1$ levels of the following {\em failed literal} rule: inductively, supposing 
we have defined the system with $s-1$ levels (where $0$ levels is unit 
propagation), $s$ levels of the failed literal rule is as follows: if, when we 
add an assignment $x_i=b$ to our partial assignment $\rho$, we can obtain a 
level-$s-1$ refutation, then we can infer $x_i=1-b$, and add this to $\rho$ 
instead. A level $s$ refutation then occurs if we can eventually obtain a 
level-$s-1$ refutation without asserting any additional literals (thus, 
eventually, a unit propagation refutation). Ans\'{o}tegui et al.~found that 
instances of resolution that are easy in practice are refutable with relatively 
small $s$. Note that unit propagation alone ($s=1$) already simulates chaining,
e.g.~in Horn KBs.

\begin{theorem}
The degree-$s+1$ sum-of-squares relaxation of the linear inequality encoding of 
a CNF is infeasible if there exists a level-$s$ refutation.
\end{theorem}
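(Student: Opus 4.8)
I prove the theorem by showing that a level-$s$ refutation rules out every feasible solution of the degree-$(s+1)$ relaxation. A feasible solution is exactly a degree-$(s+1)$ \emph{pseudo-expectation} $\tilde{\bbE}$: a linear functional on polynomials of degree at most $s+1$ with $\tilde{\bbE}[1]=1$, positive semidefinite moment matrices, positive semidefinite localizing matrices for the clause inequalities $\sum_{\ell\in C}\ell-1\geq 0$, and vanishing against the Boolean axioms $\ell^2-\ell=0$ and complementarity axioms $x_i+\bar{x}_i-1=0$; so it suffices to derive a contradiction from such a $\tilde{\bbE}$. (Equivalently, by Theorem~\ref{ineq-dual} one could instead assemble a degree-$(s+1)$ sum-of-squares refutation, but replaying the Ans\'otegui et al.\ procedure against $\tilde{\bbE}$ is cleaner.) The argument is by induction on $s$. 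Throughout, say $\tilde{\bbE}$ \emph{forces} a literal $\ell$ if $\tilde{\bbE}[\ell]\geq 1$; by complementarity this is equivalent to $\tilde{\bbE}[\bar{\ell}]\leq 0$, and---once the degree is at least $2$, using $\tilde{\bbE}[\bar{\ell}]=\tilde{\bbE}[\bar{\ell}^2]\geq 0$---to $\tilde{\bbE}[\ell]=1$, $\tilde{\bbE}[\bar{\ell}]=0$.

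The base case $s=0$ is pure linear reasoning: a level-$0$ refutation is a unit-propagation refutation, and a degree-$1$ pseudo-expectation is a linear functional with $\tilde{\bbE}[1]=1$ satisfying complementarity and the scalar clause inequalities $\sum_{\ell\in C}\tilde{\bbE}[\ell]\geq 1$. Unit propagation is emulated step for step: if a clause $C=\ell\vee\ell_1\vee\cdots\vee\ell_k$ has $\bar{\ell}_1,\dots,\bar{\ell}_k$ already forced, then combining $\sum_{\ell'\in C}\tilde{\bbE}[\ell']\geq 1$ with the consequences $\tilde{\bbE}[\ell_j]\leq 0$ and the complementarity identities yields $\tilde{\bbE}[\ell]\geq 1$, so $\ell$ becomes forced. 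Running this along the derivation, every propagated literal is forced, and the terminal empty clause $C$ has $\tilde{\bbE}[\ell]\leq 0$ for each $\ell\in C$, so $\sum_{\ell\in C}\tilde{\bbE}[\ell]\leq 0<1$, contradicting $C$'s inequality. This emulation needs only degree $1$, and I reuse it verbatim below.

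For $s\geq 1$, let $\tilde{\bbE}$ be a degree-$(s+1)$ pseudo-expectation for the encoding of $\calC$; I replay the level-$s$ refutation, maintaining the partial assignment $\rho$ it builds and the invariant that $\tilde{\bbE}$ forces every literal of $\rho$. Unit-propagation steps use the base-case emulation. A failed-literal step invokes the level-$(s-1)$ rule at a variable $x_i$, value $b$: it certifies that $\calC|_\rho\cup\{x_i=b\}$ has a level-$(s-1)$ refutation and concludes $x_i=\bar{b}$. Let $\chi$ be the literal asserting $x_i=b$ and split on $\tilde{\bbE}[\chi]$. If $\tilde{\bbE}[\chi]>0$, form the conditioned functional $\tilde{\bbE}'[q]:=\tilde{\bbE}[\chi\,q]/\tilde{\bbE}[\chi]$; using $\chi\equiv\chi^2$ modulo the Boolean axioms, $\tilde{\bbE}'$ is a degree-$s$ pseudo-expectation which forces $\chi$ and still forces every $\ell\in\rho$ (from $\tilde{\bbE}[\bar{\ell}]=0$ one gets $\tilde{\bbE}[\chi\bar{\ell}]=0$, e.g.\ because $\chi\bar{\ell}$ reduces modulo the Boolean axioms to a product of literals whose localizing matrix is positive semidefinite with zero diagonal, hence zero). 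So $\tilde{\bbE}'$ satisfies the clause constraints of $\calC|_\rho$ (the literals deleted by $\rho$ contribute $0$) and the unit clause $\chi\geq 1$, i.e.\ $\tilde{\bbE}'$ is a degree-$s$ pseudo-expectation for the encoding of $\calC|_\rho\cup\{x_i=b\}$---contradicting the inductive hypothesis. Hence $\tilde{\bbE}[\chi]\leq 0$, i.e.\ $\tilde{\bbE}$ forces $x_i=\bar{b}$, which we add to $\rho$ and continue. The level-$s$ refutation terminates either at an empty clause (handled as in the base case) or by exhibiting a level-$(s-1)$ refutation of $\calC|_\rho$ with no further literals asserted; in the latter case the inductive hypothesis, applied to the degree-$s$ restriction of $\tilde{\bbE}$ (which forces $\rho$ and hence models the encoding of $\calC|_\rho$), is immediately a contradiction.

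The one genuinely delicate point---the main obstacle---is the degree accounting for the conditioning step: one must check that passing from $\tilde{\bbE}$ to $\tilde{\bbE}'$ costs only a single level of the hierarchy, carefully tracking the degree increases from the reductions $\chi\sigma\equiv(\text{sum of squares})$ and $\chi\bar{\ell}\equiv(\text{product of literals})$ modulo the Boolean axioms, the sizes of the moment and localizing matrices and the attendant $\lfloor\cdot/2\rfloor$ rounding, and the small-$s$ cases where a naive appeal to pseudo-Cauchy--Schwarz would demand more degree than is available (which is why the ``forced literals stay forced'' claim is routed through the vanishing-localizing-matrix argument). A second, more routine, obstacle is stating the recursive semantics of the level-$s$ rule precisely enough to see that the replay is faithful---in particular, that each nested failed-literal application genuinely recurses at one lower level, so the induction bottoms out at unit propagation at degree $1$.
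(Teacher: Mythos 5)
Your proof is correct and follows essentially the same route as the paper: induction on the level $s$, a degree-$1$ linear emulation of unit propagation as the base case, and for the failed-literal rule a conditioning step that is exactly the paper's ``conditioning operation'' of Karlin et al.\ (your $\tilde{\bbE}'[q]=\tilde{\bbE}[\chi q]/\tilde{\bbE}[\chi]$ is the paper's rescaled minor of the moment and localizing matrices indexed by monomials containing $\chi$), followed by an appeal to the inductive hypothesis to force $\tilde{\bbE}[\chi]=0$. The only presentational difference is that you phrase things via pseudo-expectations and an explicitly maintained partial assignment $\rho$, and you flag (without fully discharging) the degree bookkeeping for conditioning, which the paper likewise delegates to the cited conditioning operation.
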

\begin{proof}
Naturally, we proceed by induction on $s$. First, for level-$0$, we argue that
degree-$1$ sum-of-squares detects unit propagation refutations among the input
system. We will argue this by induction on the number of steps of unit 
propagation inference. In $0$ steps, if there is an empty clause in the input,
we have the constraint $-1\geq 0$ in the input system, which is a trivial
contradiction. Now, after $i$ steps of inference, suppose we have $\ell_i\leq 
0$ (or $\ell_i=0$) for $i\neq i^*$ in some clause. Then we can derive $\ell_{i^*}
\geq 1$ in degree 1 (and hence, $\bar{\ell}_{i^*}\leq 0$) from $\sum_i\ell_i-1+
\sum_{i\neq i^*}-\ell_i=\ell_{i^*}-1$. Observe that these linear combinations do
not increase the degree. Thus, finally, if we have derived $\ell_i\leq 0$ for
all literals in some clause, we can similarly derive $-1\geq 0$ to obtain our
contradiction in degree-$1$.

Now, given that the program detects level-$s-1$ refutations in degree $s$,
we argue that it detects level $s$ refutations in degree $s+1$: indeed, we will
argue that if $\bar{\ell}$ can be inferred by an application of the failed
literal rule and if there is a feasible solution to the degree $s+1$ 
sum-of-squares relaxation, that for all monomials $x^{\alpha}$, $e(\ell 
x^{\vec{\alpha}})=0$. Indeed, suppose not. Consider the Cholesky factorization of 
the (positive semidefinite) moment matrix into $UU^\top$. We know that we must 
have that for the row indexed by $\ell$, $\vec{u}_\ell\neq 0$, so $e(\ell)=
e(\ell^2)>0$. It follows then that we can perform the following 
``{\em conditioning operation}'' (from Karlin et al.~\citeyear{kmn11}): for the 
minor of the moment and localizing matrices consisting of all index monomials 
with a factor of $\ell$, we know that this is a positive semidefinite minor. If 
we rescale each by $1/e(\ell^2)$, then indeed it further satisfies the constraint 
that the entry we obtained from the $(\ell,\ell)$ position takes value $1$.  We 
thus see that this is a solution to the sum-of-squares program of degree $s$ in 
which moreover it may be shown that by the localizing constraints for the Boolean 
axiom on $\ell$, for any monomial $x^{\vec{\alpha}}$, $e(\ell x^{\vec{\alpha}})=
e(x^{\vec{\alpha}})$. More generally by the localizing constraints on the 
complementarity axiom, $e(\bar{\ell})=0$ in this solution, as is
any monomial containing $\bar{\ell}$. Thus, we obtain a solution in which $\ell
=1$ and $\bar{\ell}=0$. But since we are supposing that we can infer 
$\bar{\ell}$ by an application of the level-$s$ failed literal rule,  this
means that there is a level-$s-1$ refutation of this system. By IH, therefore,
this system must be infeasible, a contradiction. It must be then that indeed
$e(\ell x^{\vec{\alpha}})=0$ for all monomials $x^{\vec{\alpha}}$ as claimed
and therefore also $e(\bar{\ell}x^{\vec{\alpha}})=e(x^{\vec{\alpha}})$; in
particular, then, $e(\bar{\ell})=e(1)=1$ in any feasible solution.
Finally if some input clause simplifies to the empty clause, we again obtain that
$-1\cdot e(1)\geq 0$ in contradiction to our constraint that $e(1)=1$. Thus, if 
there is a level-$s$ refutation, the degree $s+1$ sum-of-squares relaxation must 
be infeasible.
\end{proof}

\paragraph{Weaknesses.} So we see how we can use sum-of-squares to compute
estimates on probabilities and bounds on expected values, using the complexity
of deriving the bound rather than the form of the distribution as our ``bias.''
Compared to, e.g., graphical models, this approach circumvents some fundamental
difficulties, but has some other inherent weaknesses.
The main weakness is that we cannot express prior
knowledge about the independence of random variables in our distribution.
Indeed, this would lead to quadratic (or worse) constraints in the ``primal''
optimization problem, preventing us from solving it in polynomial time.
A second weakness is that in our tractable fragments, we can only infer values
for marginal distributions that refer to at most $O(1)$ variables at a time.
(That is, we must ``sum out'' all but $O(1)$ variables.) We may have constraints
on more variables as long as these are expressed by a low-degree polynomial,
such as our linear inequality encodings of clauses, but this only captures a
limited family of the possible constraints.

\section{Implicit Learning From Partial Examples}

We now suppose that we have access to {\em partial examples} drawn from the
distribution $D$ we are seeking to reason about. We would like to use these
examples to empirically learn additional constraints that the distribution
satisfies. We will develop a notion of implicit learning of constraints for
use in answering queries in our probability logic, analogous to Juba~\citeyear
{juba13-ijcai} for Boolean logics.

\subsection{Masking and Testability}
We will use a model of learning under partial information adapted from
prior works~\cite{michael10,rubin76}. In such models, the distribution over 
partial examples is produced as follows. First, a ``ground truth'' example 
$\vec{x}$ is drawn from the distribution $D$. Then there is a second random 
process $M$ that independently selects a function $m$ that takes complete
examples $\vec{x}$ and produces a {\em partial example} $\vec{\rho}$ by 
replacing any number of coordinates of $\vec{x}$ with the value $*$ meaning
``unknown'' or ``missing.'' In particular, the choice of which coordinates to
hide may depend on all of the actual values of $\vec{x}$. $M$, like $D$, is
arbitrary in general. Indeed, $m(\vec{x})$ may even hide the entire example at 
will. Thus, we can only hope to learn constraints from the partial examples 
$\vec{\rho}$ it produces that are revealed by $M$ to be obeyed with high 
probability.

A family of such constraints is as follows. Again, we suppose that each $i$th
attribute is known to have some upper and lower bound $B_i$ and $L_i$, and that
the corresponding constraints are included in the program. For a given partial 
example $\vec{\rho}$, there also exists an upper and lower bound, $B_{\vec
{\alpha}}$ and $L_{\vec{\alpha}}$, on the monomial $x^{\vec{\alpha}}$ given 
$\vec{\rho}$ as follows: first, 
for each $x_i^{\alpha_i}$, if $\rho_i\neq *$, then we substitute
 $\rho_i^{\alpha_i}$ for $\rho_i$. Then we compute the largest and smallest 
magnitude positive and negative values consistent with the bounds on the range 
of each variable (if any) in the monomial as before.

\begin{definition}
We will say that a polynomial inequality 
$\sum_{\vec{\alpha}}p_{\vec{\alpha}}x^{\vec{\alpha}}\geq 0$ is {\em witnessed} 
under $\vec{\rho}$ if, when we plug in the upper bound for each 
$x^{\vec{\alpha}}$ under $\vec{\rho}$ for every $p_{\vec{\alpha}}< 0$ and the
lower bound when $p_{\vec{\alpha}} >0$, the inequality is still
satisfied. We will say that a system of polynomial constraints 
$p_1(\vec{x})\geq 0,\ldots,p_\ell(\vec{x})\geq 0$ is
{\em $(1-\gamma)$-testable} with respect to $M$ and $D$ if with probability
at least $(1-\gamma)$ over $\vec{\rho}$ drawn from $M$ and $D$, every 
$p_i(\vec{x})\geq 0$ in the system is witnessed under $\vec{\rho}$. 
\end{definition}
We will
show that knowledge encoded by all such testable systems of constraints can be
efficiently learned implicitly, even though there may be many such constraints,
as long as $\gamma$ is small enough.
In support of the empirical estimation of such expressions, we will need the 
following two definitions.

\begin{definition}
The {\em upper bound} of a polynomial expression 
$\sum_{\vec{\alpha}}p_{\vec{\alpha}}x^{\vec{\alpha}}$ is given by substituting 
the upper bound on $x^{\vec{\alpha}}$ (for the empty partial assignment) for 
each monomial $x^{\vec{\alpha}}$ if $p_{\vec{\alpha}}>0$, and substituting the lower 
bound on $x^{\vec{\alpha}}$ for $p_{\vec{\alpha}}<0$. The {\em lower bound} is similarly
given by substituting lower bounds on $x^{\vec{\alpha}}$ for $p_{\vec{\alpha}}>0$ and
upper bounds on $x^{\vec{\alpha}}$ for $p_{\vec{\alpha}}<0$. 
The {\em na\"{\i}ve norm} of the polynomial expression is then given by the
maximum of its upper bound and the absolute value of its lower bound.
\end{definition}

\begin{definition}
For a polynomial $p(\vec{x})$, we will let 
$p|_{\vec{\rho}}(\vec{x})$ denote the polynomial with $\rho_i$ plugged in for
$x_i$ whenever $\rho_i\neq *$, and collecting terms with the same monomial. We 
refer to this as the {\em partial evaluation} of $p$ under $\vec{\rho}$.
\end{definition}

Note that both can be computed in linear time in the size of the expression, 
given our bounds on the individual monomials and the partial assignment,
respectively.

\subsection{Deciding Queries With Implicit Learning}

Given a degree bound $d$, a collection of partial examples $\vec{\rho}^{(1)},
\ldots,\vec{\rho}^{(m)}$, a system of support constraints,
$g_j(\vec{x})\geq 0$ (for $j=1,\ldots,r$, that we assume includes the upper and
lower bound constraints for each attribute) and $h_k(\vec{x})=0$ (for
$k=1,\ldots,s$), and a system of moment constraints,
$p_\ell(\vec{x})\geq 0$ (for $\ell=1,\ldots,t$), we write the following
semidefinite program. For each $i=1,\ldots,m$, we create a set of variables
$\vec{x}^{(i)}$ for those variables unfixed in $\vec{\rho}^{(i)}$. We add a
degree-$d$ moment matrix constraint for $\vec{x}^{(i)}$, a degree-$d$
localizing matrix constraint for each $g_j|_{\vec{\rho}^{(i)}}(\vec{x}^{(i)})$,
and a degree-$d$ localizing matrix constraint for each 
$h_k|_{\vec{\rho}^{(i)}}(\vec{x}^{(i)})$.\footnote{%
We implicitly add the constraint that for $i\neq i'$, any monomial containing 
variables from the variables for examples $i$ and $i'$ gets value 0. But since
the moment matrix then has a block-diagonal structure, it is equivalent to
simply impose the constraint that each $i$th block has a positive semidefinite
moment matrix.}
Finally, we write for each monomial $x^{\vec{\alpha}}$ of degree up to $d$,
the constraints 
\begin{align*}
\frac{1}{m}\sum_{i=1}^m (x^{\vec{\alpha}})|_{\vec{\rho}^{(i)}}-
\frac{L_{\vec{\alpha}}\sqrt{\ln({2n\choose\leq d}/\delta)}}{\sqrt{2m}} 
\leq x^{\vec{\alpha}}&\\
x^{\vec{\alpha}}
\leq \frac{1}{m}\sum_{i=1}^m (x^{\vec{\alpha}})|_{\vec{\rho}^{(i)}}+\frac{B_{\vec{\alpha}}\sqrt{\ln({2n\choose \leq d}/\delta)}}{\sqrt{2m}}&
\end{align*}
and add the constraints $p_\ell(\vec{x})\geq 0$. We then accept the system
if and only if the semidefinite program is feasible.

We will need Hoeffding's inequality for the analysis, to establish that these
empirical constraints are valid:

\begin{theorem}[Hoeffding's inequality]
Let $X_1,\ldots,X_m$ be i.i.d.~$[0,1]$-valued random variables.
Let $\bar{X}=\frac{1}{m}\sum_{i=1}^mX_i$. Then for $\gamma>0$,
$\Pr[\bar{X}-\bbE[X_i] > \gamma]\leq e^{-2m\gamma^2}$.
\end{theorem}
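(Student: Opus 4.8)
The plan is to establish this classical bound by the exponential-moment (Chernoff) method. First I would recenter the summands: set $Y_i = X_i - \bbE[X_i]$, so that $\bbE[Y_i] = 0$ and each $Y_i$ lies almost surely in an interval of length at most $1$ (since $X_i \in [0,1]$). The event $\bar{X} - \bbE[X_i] > \gamma$ is exactly $\sum_{i=1}^m Y_i > m\gamma$, so for any fixed $t > 0$, Markov's inequality applied to the nonnegative random variable $e^{t\sum_i Y_i}$ gives
\[
\Pr\Bigl[\,\sum_{i=1}^m Y_i > m\gamma\,\Bigr] \;\le\; e^{-tm\gamma}\,\bbE\Bigl[e^{t\sum_{i=1}^m Y_i}\Bigr] \;=\; e^{-tm\gamma}\prod_{i=1}^m \bbE\bigl[e^{tY_i}\bigr],
\]
the last equality using independence of the $Y_i$.

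The heart of the argument is Hoeffding's lemma: if $\bbE[Y] = 0$ and $Y \in [a,b]$ almost surely, then $\bbE[e^{tY}] \le e^{t^2(b-a)^2/8}$. I would prove this using convexity of $s \mapsto e^{ts}$: for $y \in [a,b]$, writing $y$ as the convex combination $\tfrac{b-y}{b-a}\,a + \tfrac{y-a}{b-a}\,b$ gives $e^{ty} \le \tfrac{b-y}{b-a}e^{ta} + \tfrac{y-a}{b-a}e^{tb}$; taking expectations and using $\bbE[Y] = 0$ yields $\bbE[e^{tY}] \le \tfrac{b}{b-a}e^{ta} - \tfrac{a}{b-a}e^{tb}$. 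Writing $h = t(b-a)$ and $p = -a/(b-a) \in [0,1]$, the right-hand side equals $e^{\varphi(h)}$ with $\varphi(h) = -ph + \ln\bigl(1 - p + pe^h\bigr)$. A direct computation gives $\varphi(0) = \varphi'(0) = 0$ and $\varphi''(h) = \tfrac{p(1-p)e^h}{(1-p+pe^h)^2} \le \tfrac14$, where the last inequality is AM--GM ($uv \le \bigl(\tfrac{u+v}{2}\bigr)^2$ applied with $u = 1-p$ and $v = pe^h$). Taylor's theorem with Lagrange remainder then gives $\varphi(h) \le h^2/8$, i.e.\ $\bbE[e^{tY}] \le e^{t^2(b-a)^2/8}$. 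Applying this to each $Y_i$, whose range has length at most $1$, gives $\bbE[e^{tY_i}] \le e^{t^2/8}$.

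Substituting back, $\Pr[\bar{X} - \bbE[X_i] > \gamma] \le e^{-tm\gamma}\bigl(e^{t^2/8}\bigr)^m = e^{-tm\gamma + mt^2/8}$ for every $t > 0$. The final step is to optimize the exponent $-tm\gamma + mt^2/8$ over $t > 0$: it is minimized at $t = 4\gamma$, where it equals $-4m\gamma^2 + 2m\gamma^2 = -2m\gamma^2$, which yields the claimed bound $e^{-2m\gamma^2}$. The only genuinely delicate step is Hoeffding's lemma, and within it the second-derivative bound $\varphi''(h) \le 1/4$; the rest is routine Chernoff bookkeeping and a one-variable minimization.
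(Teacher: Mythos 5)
Your proof is correct: the recentering, Markov's inequality applied to $e^{t\sum_i Y_i}$, the convexity argument and second-derivative bound $\varphi''(h)\leq 1/4$ establishing Hoeffding's lemma, and the optimization at $t=4\gamma$ giving exponent $-2m\gamma^2$ are all accurate (the only trivial edge case is a degenerate $Y_i$ with $b=a$, where $\bbE[e^{tY_i}]=1$ directly). Note that the paper does not prove this statement at all---it cites Hoeffding's inequality as a standard tool for its sample-complexity analysis---so there is no authorial proof to compare against; yours is the canonical exponential-moment (Chernoff) argument and would serve as a complete proof.
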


\begin{theorem}
For $m=\Omega(S^2(d\log n+\log\frac{1}{\delta}))$ where for all
$\vec{\alpha}$ of degree at most $d$, $B_{\vec{\alpha}}-L_{\vec{\alpha}}\leq 
S$, with probability $1-\delta$:
\begin{compactitem}
\item if $D$ satisfies the given system of constraints the algorithm accepts
\item if there is a system of constraints that is $(1-\frac{1}{2S})$-testable 
under $M$ and $D$ and, together with the given system, completes a 
sum-of-squares refutation with na\"{\i}ve norm at most $S$, then the algorithm 
rejects.
\end{compactitem}
\end{theorem}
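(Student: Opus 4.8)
The plan is to prove the two bullets separately, working throughout on one event of probability at least $1-\delta$, namely the intersection of a ``moment concentration'' event (used for the first bullet) and a ``witnessing concentration'' event (used for the second); each is furnished by Hoeffding's inequality together with a union bound, so one splits $\delta$ between them and adjusts the hidden constant in $m=\Omega(S^2(d\log n+\log\frac1\delta))$ accordingly.

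For the first bullet I would exhibit an explicit feasible point of the semidefinite program. Each partial example $\vec\rho^{(i)}$ is a masking of some complete draw $\vec x^{(i)}\sim D$ that agrees with $\vec\rho^{(i)}$ on the observed coordinates, so take the $i$th block of moment variables to be the moments of the point mass at the hidden coordinates of $\vec x^{(i)}$: its moment matrix is rank one and positive semidefinite, and since $\vec x^{(i)}\in\supp(D)$ makes $g_j(\vec x^{(i)})\ge 0$ and $h_k(\vec x^{(i)})=0$, substituting the hidden coordinates into $g_j|_{\vec\rho^{(i)}}$ and $h_k|_{\vec\rho^{(i)}}$ returns exactly these values, so each localizing matrix is a nonnegative (resp.\ zero) scalar times that rank-one matrix. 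For the global moment variables set $x^{\vec\alpha}:=\bbE_D[x^{\vec\alpha}]$, which makes every $p_\ell(\vec x)$ evaluate to $\bbE_D[p_\ell(\vec x)]\ge 0$. Under this assignment $\frac1m\sum_i (x^{\vec\alpha})|_{\vec\rho^{(i)}}$ evaluates to the empirical average $\frac1m\sum_i(\vec x^{(i)})^{\vec\alpha}$ of i.i.d.\ quantities lying in $[L_{\vec\alpha},B_{\vec\alpha}]$ with mean $\bbE_D[x^{\vec\alpha}]$, so Hoeffding's inequality and a union bound over the at most $\binom{2n}{\le d}$ monomials of degree at most $d$ place $\bbE_D[x^{\vec\alpha}]$ inside the window prescribed by the empirical constraints for all $\vec\alpha$ simultaneously, with probability $1-\delta/2$; this is where $B_{\vec\alpha}-L_{\vec\alpha}\le S$ and the sample bound enter, the window half-width being (up to the slack factor $S$) the monomial range times $\sqrt{\ln(\binom{2n}{\le d}/\delta)/(2m)}$.

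For the second bullet I argue the contrapositive of infeasibility. Suppose the program is feasible, yielding per-example pseudo-expectations $\tilde\bbE^{(i)}$ (moment matrices positive semidefinite, localizing matrices for $g_j|_{\vec\rho^{(i)}}$ positive semidefinite, equality constraints for $h_k|_{\vec\rho^{(i)}}$ satisfied) and global moments $x^{\vec\alpha}$ obeying $p_\ell(\vec x)\ge 0$ and the empirical constraints. Write the hypothesized refutation as the polynomial identity $\sigma_0+\sum_j\sigma_j g_j+\sum_k u_k h_k+\sum_\ell c_\ell p_\ell+\sum_{i'}c'_{i'}p'_{i'}\equiv -1$, with the $\sigma$'s sums of squares, $c_\ell,c'_{i'}\ge 0$ the scalars attached to the moment constraints and to the testable constraints $p'_{i'}$, and the whole expression of degree at most $d$ and na\"{\i}ve norm at most $S$. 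Substituting the observed coordinates of $\vec\rho^{(i)}$ preserves the identity, keeps each $\sigma_0|_{\vec\rho^{(i)}}$ and $\sigma_j|_{\vec\rho^{(i)}}$ a sum of squares, and raises no degree; applying $\tilde\bbE^{(i)}$ annihilates the $h_k|_{\vec\rho^{(i)}}$ terms, makes the $\sigma_0|_{\vec\rho^{(i)}}$ and $\sigma_j|_{\vec\rho^{(i)}}g_j|_{\vec\rho^{(i)}}$ terms nonnegative, and so leaves $\sum_\ell c_\ell\tilde\bbE^{(i)}[p_\ell|_{\vec\rho^{(i)}}]+\sum_{i'}c'_{i'}\tilde\bbE^{(i)}[p'_{i'}|_{\vec\rho^{(i)}}]\le -1$. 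Averaging over $i$ and using linearity, the first sum becomes $\sum_\ell c_\ell p_\ell(\widehat x)$ with $\widehat x^{\vec\alpha}=\frac1m\sum_i\tilde\bbE^{(i)}[(x^{\vec\alpha})|_{\vec\rho^{(i)}}]$; the empirical constraints pin each $\widehat x^{\vec\alpha}$ within $O(S\sqrt{(d\log n+\log\frac1\delta)/m})$ of $x^{\vec\alpha}$, and since $p_\ell(\vec x)\ge 0$ is a constraint of the program, $\sum_\ell c_\ell p_\ell(\widehat x)>-\frac14$ once $m$ is large enough, using the na\"{\i}ve-norm bound to control the coefficients.

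The crux is the $p'_{i'}$ terms, which never appear in the program at all. For any $i$ at which every $p'_{i'}$ is witnessed under $\vec\rho^{(i)}$, the a priori bounds on each monomial given $\vec\rho^{(i)}$, which the $i$th block certifies via the monomial-bound constraints it inherits from the original program, force $\tilde\bbE^{(i)}[(x^{\vec\alpha})|_{\vec\rho^{(i)}}]\in[L^{\vec\rho^{(i)}}_{\vec\alpha},B^{\vec\rho^{(i)}}_{\vec\alpha}]$, so $\tilde\bbE^{(i)}[p'_{i'}|_{\vec\rho^{(i)}}]$ is at least the lower bound of $p'_{i'}|_{\vec\rho^{(i)}}$, which is nonnegative by the definition of ``witnessed''; for any other $i$ it is at least $-S$ by the na\"{\i}ve-norm bound. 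By $(1-\frac1{2S})$-testability a single $\vec\rho^{(i)}$ fails to witness all of $\{p'_{i'}\}$ with probability at most $\frac1{2S}$, so one more application of Hoeffding's inequality (the ``witnessing concentration'' event, probability $1-\delta/2$) keeps the fraction of such $i$ strictly below $\frac3{4S}$, whence, invoking the na\"{\i}ve-norm bound once more, $\sum_{i'}c'_{i'}\cdot\frac1m\sum_i\tilde\bbE^{(i)}[p'_{i'}|_{\vec\rho^{(i)}}]>-S\cdot\frac3{4S}=-\frac34$. Adding the two estimates, the left side of the averaged inequality strictly exceeds $-\frac14-\frac34=-1$, contradicting that it is at most $-1$; hence the program is infeasible and the algorithm rejects. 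I expect the main obstacle to be exactly this last step --- converting the probabilistic testability guarantee into a deterministic lower bound on pseudo-expectations of constraints that were never added to the program, through the ``witnessed'' condition and the monomial bounds --- together with the accounting that makes the $\frac1{2S}$ slack in testability, the na\"{\i}ve-norm-$S$ scaling, and the $O(\sqrt{\cdot/m})$ moment-estimation error combine to strictly less than $1$; the facts that restriction preserves sums of squares and that the restricted range constraints still determine the monomial ranges within degree $d$ are the routine technical points.
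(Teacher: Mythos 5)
Your proposal follows essentially the same route as the paper's proof: the first bullet is handled exactly as the paper does it (per-example point-mass completions plus the true moments as a feasible point, with Hoeffding and a union bound over the at most $\binom{2n}{\leq d}$ monomials), and the second bullet is the paper's argument transposed from the dual to the primal --- where the paper restricts the hypothesized refutation by each $\vec{\rho}^{(i)}$, averages, and assembles an explicit sum-of-squares derivation of a negative constant ($-1/6$, then rescaled) from the program's own constraints, you assume feasibility, apply the per-example pseudo-expectations to the same restricted-and-averaged identity, and reach the same contradiction using the same three ingredients (the empirical moment windows for the given moment constraints, witnessing plus the per-$\vec{\rho}^{(i)}$ monomial bounds for the testable terms on witnessed examples, and the na\"{\i}ve-norm bound $-S$ times a Hoeffding-controlled unwitnessed fraction otherwise), so the two arguments are equivalent via the soundness/duality theorem. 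One caveat: you write the testable portion of the refutation as $\sum_{i'}c'_{i'}p'_{i'}$ with nonnegative scalars, but the theorem (and the paper's displayed expression) allows testable support constraints multiplied by sums of squares $\tilde{\sigma}_{\tilde{j}}$ and testable equalities multiplied by arbitrary polynomials $\tilde{u}_{\tilde{k}}$; in that generality a lower bound on $\tilde{\bbE}^{(i)}[\tilde{g}|_{\vec{\rho}^{(i)}}]$ alone does not control $\tilde{\bbE}^{(i)}[(\tilde{\sigma}\tilde{g})|_{\vec{\rho}^{(i)}}]$, so the witnessing and na\"{\i}ve-norm accounting must be applied to the whole product terms (as the paper implicitly does when it bounds the ``witnessed terms'' of the refutation); with that adjustment your argument goes through at the same level of rigor as the paper's.
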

\noindent
Note that for a Boolean system, the na\"{\i}ve norm closely corresponds to the
size of the proof (in number of monomials).

\begin{proof}
We first analyze the first case. Since the distribution is assumed to satisfy
the support constraints, for any $\vec{\rho}=m(\vec{x})$ drawn from $M$ and $D$,
since $\vec{x}$ is in the support of $D$, the completion of $\vec{\rho}$ to
$\vec{x}$ is an assignment that satisfies all of the support constraints.
Moreover, since for each moment, $x^{\vec{\alpha}}$ has $B_{\vec{\alpha}}-
L_{\vec{\alpha}}\leq S$, we see that $\frac{1}{S}(x^{\vec{\alpha}}-
L_{\vec{\alpha}})$ is a random variable in the range $[0,1]$;
therefore, by Hoeffding's inequality, for our choice of $m$ the moment 
constraints hold with probability $1-\delta/{2n\choose \leq d}$ for each
$\vec{\alpha}$. So, by a union bound over these moments, with probability
$1-\delta$, $\frac{1}{m}\sum_{i=1}^m(x^{(i)})^{\vec{\alpha}}$ is feasible in
these constraints, as therefore is $\frac{1}{m}\sum_{i=1}^m
(x^{\vec{\alpha}})|_{\vec{\rho}^{(i)}}$. So our empirical upper and lower bound
constraints on $x^{\vec{\alpha}}$ are satisfied. We also know that 
$\bbE[x^{\vec{\alpha}}]$, by assumption, satisfies the given polynomial
constraints on our moments. Thus the sum-of-squares relaxation is feasible and
so our algorithm accepts with probability $1-\delta$ in this case as needed.

Now, for the second case. We assume that there is a set of $(1-1/2S)$-testable
support constraints, $\tilde{g}_{\tilde{j}}(\vec{x})\geq 0$ for $\tilde{j}=1,
\ldots,\tilde{r}$ and $\tilde{h}_{\tilde{k}}(\vec{x})=0$ for $\tilde{k}=1,
\ldots,\tilde{s}$, and moment constraints, $\tilde{p}_{\tilde{\ell}}(\vec{x})
\geq 0$ for $\tilde{\ell}=1,\ldots,\tilde{t}$, that complete a degree-$d$ 
sum-of-squares refutation with na\"{\i}ve norm at most $S$. Suppose we 
substitute $x^{\vec{\alpha}}|_{\vec{\rho}^{(i)}}$ for $x^{\vec{\alpha}}$ for 
each $\vec{\alpha}$ in this proof. Observe that it remains formally identical to
$-1$, the degree does not increase, and the sum-of-squares expressions remain
sum-of-squares expressions (on $\vec{x}^{(i)}$). 

Now, suppose we average over these $m$ refutations. Again, we obtain an 
expression formally identical to $-1$ of the same degree. The final 
sum-of-squares term $\sigma_0(\vec{x})$ becomes another sum-of-squares, 
$\frac{1}{m}\sum_{i=1}^m\sigma_0(\vec{x})|_{\vec{\rho}^{(i)}}$, of the same 
degree, so we can find an analogous expression in our program. For portions of 
the expression derived from the support constraints, for example the term 
$\sigma_j(\vec{x})g_j(\vec{x})$, we have $\sum_{i=1}^m(\frac{1}{m}
\sigma_j(\vec{x})g_j(\vec{x}))|_{\vec{\rho}^{(i)}}$. So, we see that this can 
still be written using the support constraints for the individual examples 
included in our program. The equality support constraints can be similarly 
transformed.

Next, we consider the moment bounds, which we know can only be multiplied by
positive constants. We find that the expressions $\lambda_\ell p_\ell(\vec{x})$,
i.e., of the form $\lambda_\ell\sum_{\vec{\alpha}}p_{\vec{\alpha}}x^{\vec{\alpha}}$. We see again that under this substitution we obtain an expression that may
be rewritten as $\sum_{i=1}^m\frac{\lambda_\ell}{m}\sum_{\vec{\alpha}}p_{\vec{\alpha}}x^{\vec{\alpha}}|_{\vec{\rho}^{(i)}}$. Now, our program provides us
with the moment constraint $p_\ell(\vec{x})$ and the upper and lower empirical
bounds on each moment, 
$\frac{1}{m}\sum_{i=1}^mx^{\vec{\alpha}}|_{\vec{\rho}^{(i)}}+
\frac{B_{\vec{\alpha}}\sqrt{\ln({2n\choose\leq d}/\delta)}}{\sqrt{2m}}-
x^{\vec{\alpha}}\geq 0$ and 
$\frac{L_{\vec{\alpha}}\sqrt{\ln({2n\choose\leq d}/\delta)}}{\sqrt{2m}}-
\frac{1}{m}\sum_{i=1}^mx^{\vec{\alpha}}|_{\vec{\rho}^{(i)}}+x^{\vec{\alpha}}
\geq 0$. By choosing the appropriate bound (the first if $p_{\vec{\alpha}}>0$ 
and the second if $p_{\vec{\alpha}}<0$), multiplying by $|p_{\vec{\alpha}}|>0$ 
in each case, and summing these up, we can obtain an expression of the form 
\[
\frac{1}{m}\sum_{i=1}^mp_{\ell}(\vec{x})|_{\vec{\rho}^{(i)}}-p_\ell(\vec{x})+\frac{\sum_{\vec{\alpha}}|p_{\ell,\vec{\alpha}}C_{\vec{\alpha}}|\sqrt{\ln\frac{{2n\choose\leq d}}{\delta}}}{\sqrt{2m}}
\]
where $C_{\vec{\alpha}}$  
is at most our bound on $x^{\vec{\alpha}}$.
We can add a $\lambda_\ell$-multiple of this to $\lambda_\ell p_\ell(\vec{x})$ 
to obtain $\frac{1}{m}\sum_{i=1}^m\lambda_\ell p_\ell(\vec{x})|_{\vec
{\rho}^{(i)}}$ as in the averaged refutation, plus a bounded 
error term.

Finally we consider the testable constraints in the expression. Note that 
without the terms from these constraints, the original sum-of-squares refutation
sums to
\[
-1 -\left(
\sum_{\tilde{j}=1}^{\tilde{r}}\tilde{\sigma}_{\tilde{j}}(\vec{x})\tilde{g}_{\tilde{j}}(\vec{x})+
\sum_{\tilde{k}=1}^{\tilde{s}}\tilde{u}_{\tilde{k}}(\vec{x})\tilde{h}_{\tilde{k}}(\vec{x})+
\sum_{\tilde{\ell}=1}^{\tilde{t}}\lambda_{\tilde{\ell}}\tilde{p}_{\tilde{\ell}}(\vec{x})
\right)
\]
where of course, the $\tilde{\sigma}_{\tilde{j}}(\vec{x})$ are sums of squares and the
$\tilde{u}_{\tilde{k}}(\vec{x})$ are arbitrary polynomials. Now, for 
our choice of $m$, with probability $1-\delta$, for all but $\frac{2}{3S}m$ of
the partial examples, our constraints are simultaneously witnessed under $\vec
{\rho}^{(i)}$. By the definition of witnessing, the lower bounds under each such
$\vec{\rho}^{(i)}$ on the corresponding witnessed terms in the assumed 
sum-of-squares refutation are at least $0$, and these can be derived from our
support bounds for each $\vec{\rho}^{(i)}$. Thus, using our bounds on monomials,
we can bound the averaged formal expression by
\begin{align*}
-1 &-\frac{1}{m}\sum_{\substack{\vec{\rho}^{(i)}\text{ not}\\\text{witnessed}}}
\left(
\sum_{\tilde{j}=1}^{\tilde{r}}(\tilde{\sigma}_{\tilde{j}}(\vec{x})\tilde{g}_{\tilde{j}}(\vec{x}))|_{\vec{\rho}^{(i)}}+\right.\\
&\left.
\sum_{\tilde{k}=1}^{\tilde{t}}(\tilde{u}_{\tilde{k}}(\vec{x})\tilde{h}_{\tilde{k}}(\vec{x}))|_{\vec{\rho}^{(i)}}+
\sum_{\tilde{\ell}=1}^{\tilde{t}}\lambda_{\tilde{\ell}}\tilde{p}_{\tilde{\ell}}(\vec{x})|_{\vec{\rho}^{(i)}}
\right)\leq -1+\frac{2}{3}
\end{align*}
since the testable portion of the proof has lower bound at least $-S$. Note,
moreover, that we can derive the upper bound on the unwitnessed expressions
using our bounds on the empirical monomials, obtaining some additional error
terms. Summing over all of the error terms, we find that since the overall proof
has na\"{\i}ve norm at most $S$, we can obtain that the total error term is at 
most $1/6$ by an appropriate choice of $m$. Thus, overall, we see that we can
construct a sum-of-squares derivation of $-1/6$, and hence also of $-1$ since
we can multiply each coefficient by $6$ and still obtain a valid sum-of-squares
expression for the system. Therefore we detect that the system is infeasible and
reject with probability $1-\delta$ as needed in this case.
\end{proof}

\section{Directions for Future Work}
An important shortcoming of our approach is that it is propositional so far.
One can make use of ``independently quantified expressions''~\cite{valiant00}
to extend it to some fragments of first-order logic via propositionalization.
But, an interesting question is whether or not we can employ richer 
representations by making use of the symmetries in such instances, along the 
lines of work on relational linear programming by Kersting et al.~\citeyear
{kmt17}.
Finally, we required the implicitly learned constraints to be testable with
probability $1-1/2S$ where $S$ is the na\"{\i}ve norm of the proof. It 
would be desirable to make use of $1-\epsilon$-testable constraints for 
arbitrary $\epsilon$, perhaps only establishing that the portion of
the distribution satisfying the query system has total probability at most 
$\epsilon$.

\bibliographystyle{apalike}
\bibliography{sosinf-starai}
\end{document}